\newcommand{\TODO}[1]{\marginpar{Robby TODO}}
\newcommand{\w}{\vec{w}}
\newcommand{\D}{{\vec \Delta}}
\newcommand{\T}{^\intercal}
\newtheorem{trm}{Theorem}
\newtheorem{assumption}{Assumption}
\begin{document}
% The file aaai.sty is the style file for AAAI Press
% proceedings, working notes, and technical reports.
%
\title{Coactive Learning for Locally Optimal Problem Solving}

\author{Robby Goetschalckx, Alan Fern, Prasad Tadepalli\\
School of EECS,\\
Oregon State University, Corvallis, Oregon, USA
}
\maketitle
\begin{abstract}
\begin{quote}

Coactive learning is an online problem solving setting where the solutions
provided by a solver are interactively improved by a domain expert, which
in turn drives learning.
%a solver is asked to produce solutions which are improved by the expert
%to a sequence of related problems.  Each
%solution is considered by a domain expert, who either accepts the
%solution as good enough or presents an improved solution to the
%solver. The goal is to adapt the solver online in order to minimize
%the cumulative cost as measured by the amount of effort the expert
%spends on improvements. Prior work has generally assumed that the
%solver can find globally optimal solutions or at least guaranteed
%close approximations of this, and that the expert can always
%provide a better solution if any such exists. These assumptions,
%however, are typically false for difficult planning and
%combinatorial optimization problems.
In this paper we extend the
study of coactive learning to problems where obtaining a
globally optimal or near-optimal solution may be intractable or where an expert
can only be expected to make small, local improvements to a candidate solution.
The goal of learning in this new setting is to minimize the cost
as measured by the expert effort
over time. We first establish theoretical bounds
on the average cost of the existing coactive Perceptron
algorithm. In addition, we consider new online algorithms that use
cost-sensitive and Passive-Aggressive (PA) updates, showing similar
or improved theoretical bounds. We provide an empirical evaluation
of the learners in various domains, which show that the Perceptron based
algorithms are quite effective and that unlike the case for online
classification, the PA algorithms do not yield significant performance
gains.

\end{quote}
\end{abstract}

\section{Introduction}

This work is motivated by situations where a domain expert must solve a sequence
of related problems. One way to reduce the expert's effort is for an
automated solver to produce initial solutions that can then be improved, if
necessary, by the expert with less effort than solving the problem from scratch.
This requires that the solver has a good estimate of the expert's utility
function, which is often unknown and must be learned through experience. This
general notion of online learning from the improved solutions of an {\em in
situ} expert is captured by the framework of \emph{coactive learning} \cite{DBLP:journals/corr/abs-1205-4213}, \cite{icml2013_raman13}, \cite{raman12}, \cite{ecml:raman13}.
%
%goal of reducing the effort of domain experts to produce solutions
%to sequences of related planning problems. This objective is modeled
%well within the coactive learning framework, where an {\em in situ}
%expert is available who can correct and improve the solutions produced
%by a learning system \cite{joachims:coactive}.
%%In this paper we address the Machine Learning domain of \emph{coactive learning},
%where a solver presents a candidate solution to a problem to an expert, who can
%choose to accept the candidate solution or to provide a solution which the
%expert deems better (the solution has a higher \emph{utility} according to the expert).
%The task for the learner is to approximate the expert's utility function over
%time, allowing the solver to find solutions with high utility,
%so as to minimize the expert's effort.

The current state-of-the-art in coactive learning generally assumes that the
solver can either find a globally optimal solution (according to its current estimate
of the utility function) to the problem, or at least a solution which can be proven to be at least $\alpha$-close to the optimal solution (for example, using a greedy algorithm for a sub-modular utility function \cite{raman12}, \cite{ecml:raman13}).
It is also assumed that the expert can always improve the
solution if there is a possible improvement.

Unfortunately in many planning and combinatorial optimization problems, neither
of these assumptions is realistic. The current paper relaxes these assumptions
by extending coactive learning to the cases where the system can only guarantee
locally optimal solutions and the expert is only assumed to make a sequence of
local improvements, but not required to return a locally optimal solution.

%make any change if an improvement is possible. This is not always a feasiblesetting.

For example, consider a Traveling Salesperson Problem, where each edge of the
graph is described by a number of numerical features, and the edge costs are
known to be a linear combination of these features with unknown weights. We have
an approximate solver which, for a given estimate of the edge-cost function,
will give a reasonably good (but perhaps not optimal) solution to the TSP
problem. The solver presents a candidate solution to an expert (who knows the
actual cost function). The expert can either accept the solution or spend some
effort making improvements to it. The more accurate the learner's estimate of the cost
function, the less effort the expert needs to spend to adapt the
candidate solution into one which is acceptable.

%Ideally, as the number of problems solved by the system increases,
%the amount of effort the expert needs to spend on improvements will decrease.

More generally, we assume that we have a planning or a combinatorial
optimization problem, where finding a globally optimal solution for the problem
is intractable, and where it is hard for an expert to do anything more than a
sequence of local improvements to a candidate solution. We assume that we have a
black-box solver which, for a given problem and current estimate of the cost
function, will output a locally optimal solution.
%which will be hard for an expert to improve upon.
Assuming that the expert and the system have the same notion of `locality', changes made by the expert
will be very likely due to the estimate of
the distance function being different from the actual distance function. With
these assumptions, the expert feedback will allow the learner to improve its
estimate of the proper cost function, eventually removing the need for
the expert.
%adjustments.

It is important to note that the local optimality assumptions of our framework
strictly generalize those of prior coactive learning work, where the learner and the
expert were assumed to be within a factor of $\alpha$ of the true optimal solution.
 Another important aspect our framework is that we allow the learner to observe the
effort expended by the expert, or cost, when producing improvements.
The learner can exploit this additional feedback to put more emphasis on
mistakes that are more costly from the expert's perspective, ideally leading to
a faster reduction of expert effort.

Perhaps the closest related work to ours is on learning trajectory
preferences for robot manipulation \cite{jainlearning}. In this work, the authors
describe a coactive learning approach that learns a scoring function based on
improvements made by experts to automatically generated trajectories
of robot manipulation tasks. %While our work is similar at a high level and is applicable to the above manipulation task,
Our main contributions in
relation to this work are to formalize coactive learning in the context
of locally optimal planning and demonstrate better learning algorithms
with tighter performance bounds by making them cost-sensitive.

%%In addition to local optimality, we also generalize coactive learning
%to the case where the cost is example-specific and is measured by the
%human effort involved in improving the solution. Hence we develop
%cost-sensitive versions of on-line learning algorithms including
%Perceptron and Passive Aggresive (PA) and analyze their average cost.

%%Coactive learning is realted to \emph{preference learning} \cite{},
%where the learning algorithm receives pairs of examples where one is
%strictly preferred over the other, and a utility function is estimated.
%The main difference is that the estimated utility is used to present a
%candidate solution, which is then improved (if necessary) by an expert,
%turning it into more of an ``active'' learning approach, hence the name.

In addition to introducing the generalized coactive learning framework, our
other primary contribution is to propose and analyze the cumulative cost of
simple online learning algorithms for linearly parameterized cost functions. In
particular, we describe cost-insensitive and cost-sensitive variants of the
Perceptron and Passive Aggressive (PA) learning algorithms.
%%
%%cost-sensitive and cost-insensitive learning algorithms. These include
%%We will, for now, focus on learning linear classification to learn
%preferences instead of learning a real-valued utility function.
%% As long as the black-box solver can guarantee that there are locally
%no solutions which the expert would improve, we can assume that we
%have a local optimum. This, of course,
%%In addition to local optimality, we also generalize coactive learning
%to the case where the cost is example-specific and is measured by the
%human effort involved in improving the solution. Hence we develop
%cost-sensitive versions of on-line learning algorithms including Perceptron
%and Passive Aggresive (PA) and analyze their average cost.
We empirically evaluate these algorithms on four different domains and show that, the cost-sensitive Perceptron tends to perform better than all others when
there is noise in the data. Also,
unlike in classification setting, here the perceptron algorithms perform
better than the PA algorithms.
% do not perform as well as the basic perceptron based algorithms.
One reason for this is that the PA algorithms
aggressively correct for any new data, meaning that any amount of noise can
degrade the performance significantly.

\section{Problem Setup}
\label{sec:setup}

Let $\mathcal{X}$ be the set of all problems in the domain of interest, e.g. a
class of TSP graphs, and $\mathcal{Y}$ be the set of all possible problem
solutions, e.g. possible tours.
%For any, problem $x\in \mathcal{X}$ we let $\mathcal{Y}_x \subseteq \mathcal{Y}$
%denote the set of legal solutions for $x$.
The quality of solutions is judged by the expert's \emph{utility} function
$U(\langle x,y \rangle) \to \mathbb{R}$ which gives the relative merit of a
given candidate solution for a given problem.

For the purposes of learning we assume that the utility function can be well
approximated by a linear function $\hat{U}(\langle x,y \rangle) =
\w\T\vec{\phi}(\langle x,y \rangle )$. Here $\vec{\phi}(\langle x,y \rangle )$
is a real-valued feature function of problem-solution pairs and $\w$ is a
real-valued weight vector. In this work, we assume a bounded feature vector
length: $\| \vec{\phi}(\cdot) \| \leq R$.

A critical part of our coactive learning framework is the improvement of
solutions by the expert. For this purpose we consider a set $\mathcal{O}$ of
operators which can be used by the expert to transform a candidate solution into
a different candidate solution: $O_i \in \mathcal{O}: \langle x,y \rangle  \to
\langle x,y' \rangle$. The specific operators are highly domain and
user-interface dependent. For example, in a TSP application, the operators
might allow for local modifications to a current tour.

We assume a black-box solver $S:
\mathcal{X} \to \mathcal{Y}$ which can be parameterized by the utility function
weight vector $\w$ in order to optimize solutions according to $\hat{U}$. For
many planning problems, including TSP, exactly optimizing $\hat{U}$ will be
intractable. Thus, for the purposes of our analysis we make the more realistic
assumption that $S$ is locally optimal with respect to $\hat{U}$ and the set of
expert operators $\mathcal{O}$.
\begin{assumption}[Locally Optimal Solver]
\label{assump:opt}
$\nexists O_i \in \mathcal{O}:
\hat{U}(O_i(\langle x_t,S(x_t)\rangle)) > \hat{U}(\langle x_t,S(x_t)\rangle)$
\end{assumption}

We can now specify our coactive learning protocol. The learner is presented with
an arbitrary sequence of problems $x_0, x_1, x_2, \ldots$ from $\mathcal{X}$.
Upon receiving each $x_t$, the learner uses $S$ to present the expert with a
candidate solution $y_t$ based on its current utility estimate $\hat{U}$. The
expert will then perform a sequence of operators $O^{(1)} \ldots O^{(l)}$ to the
solution as long as significant local improvements are possible according to the
true utility function $U$. We formalize this as a second assumption:
\begin{assumption}[Local Expert Improvement]
\label{assump:exp}
If we denote, $z^{(0)} = \langle x_t,y_t\rangle$, $z^{(i)} = O^{(i)}(z^{(i-1)})$, there is a
constant $\kappa > 0$ such that $U(z^{(i+1)}) \geq U(z^{(i)} ) + \kappa$.
\end{assumption}
The constant $\kappa$ reflects the minimum utility improvement required for the
expert to go through the effort of applying an operator. Thus, according to this assumption
the expert will monotonically improve a candidate solution
until they are satisfied with it or they have reached diminishing returns for local
improvements. The expert cost $C_t$ for
example $x_t$ is then equal to the number of operator applications, which
reflects the amount of expert effort spent improving the candidate solution.
The average cumulative cost $\frac{1}{T}\sum_{t=0}^T C_t$ quantifies the average
effort of the expert over $T$ steps. The goal of learning is to quickly diminish this
cost.

We note that it is straightforward to generalize the framework to account for
operator dependent costs.
%Also, for the purposes of our analysis the set
%$\mathcal{O}$ of operators does not need to be explicit, as long as we can
%[not sure what you mean by need not be explicit -PT]
%guarantee that the black-box solver has locally optimal inference and the
%expert returns a local improvement with respect to $\mathcal{O}$.
%This set is also not required to be finite.

\section{Learning Algorithms}

We consider online Perceptron-style algorithms that maintain a weight vector $w_t$ that is initialized to $w_0=0$ and potentially updated after each training experience. At iteration $t$ the learner presents the expert with solution $S(x_t)$ and observes the corresponding improvement operator sequence with cost $C_t$.
% Under our local optimality assumptions, whenever $C_t > 0$, it follows that the expert must have traversed at least one solution $y'$ for which $\hat{U}(x_t,y') \leq \hat{U}(x_t,S(x_t)) $ and $U(x_t,y') > U(x_t,S(x_t))$. That is, the learning algorithm undervalued the particular solution $y'$ compared to the expert. We will take $y'$ to be the latest solution in the sequence produced by the expert that is so misranked.
%, though any such solution will suffice for our analysis.
Whenever $C_t > 0$ our learning algorithms will adjust the weight vector in the direction of $\D_t = \vec{\phi}(x_t,y') - \vec{\phi}(x_t,S(x_t))$.

\begin{algorithm}[tb]
\label{alg:update}
\caption{CoactiveUpdate(problem $x_t$, black-box solution $y_t$, expert solution $y'$ (see text), cost $C_t$)}
\begin{algorithmic}
\IF{$C_t > 0$}
\STATE $\D_t := \vec{\phi}(x_t,y') - \vec{\phi}(x_t,y_t)$
\STATE $\w_{t+1} = \w_t + \lambda_t\D_t$
\ENDIF
\end{algorithmic}
\end{algorithm}

Algorithm 1 %\ref{alg:update}
gives the schema used by our learning algorithms for updating the weight vector after the interaction involving problem $x_t$. In the tradition of the Perceptron, this update is passive in the sense that whenever $C_t = 0$, meaning that the expert is satisfied with the solution produced, the weight vector is not altered. Our four algorithms differ only in the choices they make for the learning rate $\lambda_t$, which controls the magnitude of the weight update. Below we specify each of our four variants that are analyzed and empirically evaluated in the following sections.

{\bf Perceptron (PER) ($\lambda_t = 1$).} This instance of the algorithm corresponds to the classic Perceptron algorithm,
% (for coactive learning, this is called the \emph{preference perceptron} \cite{DBLP:journals/corr/abs-1205-4213}) ,
which uses a uniform, cost-insensitive learning rate. This so-called
\emph{preference perceptron} algorithm was previously proposed for coactive learning, although its analysis was for their more restrictive assumption of global optimality  \cite{DBLP:journals/corr/abs-1205-4213}.

{\bf Cost-Sensitive Perceptron (CSPER) ($\lambda_t = C_t$)}.
Notice that the above Perceptron algorithm completely ignores the observed cost $C_t$. Intuitively, one would like to put more emphasis on examples where the cost is high, to potentially decrease the cumulative cost more rapidly. The Cost-Sensitive Perceptron (an online version of the perceptron introduced in \cite{geibel2003perceptron}) takes the cost into account by simply allowing the learning rate to scale with the cost, in this case being equal to the cost. This is an approximation of what would happen if the original Perceptron update was applied $C_t$ times per instance.
%It is clear that examples with a high cost will have a much larger impact on the weight vector than examples with a low cost.
This is similar to a standard approach to obtain an example-dependent cost-sensitive learner \cite{zadrozny2003cost}, where the probability of each example to be considered by the algorithm is proportional to the cost of misclassifying it. The algorithm presented here can be seen as a simplified online version of this.

{\bf Passive Aggressive (PA) ($\lambda_t = \frac{M -  \w_t \T \D_t}{\| \D_t \|^2}$).} The framework of PA algorithms has been well-studied in the context of more traditional online learning problems \cite{crammer2006online} and has shown advantages compared to more traditional Perceptron updates. Here we consider whether the PA framework can show similar advantage in the context of coactive learning.

The PA update principle is to be passive, just as for the Perceptron algorithms. In addition, unlike the Perceptron, PA updates are aggressive in the sense that the weights are updated in a way that are guaranteed to correct the most recent mistake. In our coactive setting, this PA principle corresponds to minimizing the change to the weight vector while ensuring that the new vector assigns the improved solution $y'$ a higher utility than $y_t$. More formally:
$$w_{t+1} = \arg\min_{\w} \|\w - \w_t\|^2,\;\;s.t.\;\; \w_{t+1} \T \D_t = M$$
where $M$ is a user specified target value for the margin. As in prior work \cite{crammer2006online} the solution to this problem is easily derived via Lagrangian methods resulting in an update with $\lambda_t = \frac{M -  \w_t \T \D_t}{\| \D_t \|^2}$. Note that either version of the Perceptron algorithm may either not correct the mistake, or update the weights much more than necessary to correct the mistake. In our experiments we set $M=1$ throughout, noting that in the coactive learning setting the specific value of $M$ only results in a different scaling of the weight vectors, without changing the behavior or the obtained costs.
%Alan: I'm not sure I agree with the following.
%
%The precise value of the margin is not important.
%If the initial weight vector is $0$, different values of
%$M$ will lead to different rescalings of the same
%sequence weight vectors, given the same input. In the
%experiments we fix $M = 1$.
%This approach leads to the \emph{Passive-Aggressive}
%(PA) algorithm, Algorithm~\ref{alg:PA}. Notice the slight
%change in the update condition: the weight vector also
%gets updated in the case where $0 < \w_t \T \D_t \leq M$,
%to try to ensure a minimal decision margin of $M$.

%\begin{algorithm}[tb]
%\label{alg:PA}
%\caption{PA(black-box solution $bb$, expert solution $e$, cost $C_t$)}
%\begin{algorithmic}
%\STATE $\D_t := \vec{\phi}(e) - \vec{\phi}(bb)$
%\IF{$ \w_t \T \D_t \leq M$}
%\STATE $\w_{t+1} = \w_t + \frac{M -  \w_t \T \D_t}{\| \D_t \|^2} \D_t$
%\ENDIF
%\end{algorithmic}
%\end{algorithm}

{\bf Cost-Sensitive PA (CSPA) ($\lambda_t = \frac{C_t -  \w_t \T \D_t}{\| \D_t \|^2}$).} Since the margin $M$ indicates how strongly an example influences the weight vector, a natural way to include the cost into the PA algorithm is by making the margin for an update equal to the cost. This approach effectively instantiates the idea of margin-scaling, e.g., as employed for structured prediction \cite{Taskar2005learning}, within the PA framework.
%
%\begin{algorithm}[tb]
%\label{alg:cs_PA}
%\caption{Cost-Sensitive PA(black-box solution $bb$, expert solution $e$, cost $C_t$)}
%\begin{algorithmic}
%\STATE $\D_t := \vec{\phi}(e) - \vec{\phi}(bb)$
%\IF{$ \w_t \T \D_t \leq C_t$}
%\STATE $\w_{t+1} = \w_t + \frac{C_t -  \w_t \T \D_t}{\| \D_t \|^2} \D_t$
%\ENDIF
%\end{algorithmic}
%\end{algorithm}

\section{Cost Bounds}
\label{sect:theory}

In this section we will present upper bounds on the average cost over time for all four algorithms presented in the previous section, in the realizable learning setting where the actual utility function is a linear function of the feature vector, represented by the weight vector $\w^*$. Since the cost is an integer, this effectively provides a bound on the number of examples where the expert is able to make any improvement on the solution.

The most similar prior analysis done for coactive learning focused on the Perceptron algorithm under the assumptions of global optimality. That analysis provided a bound on the average regret, rather than expert cost, where regret is the utility difference between the globally optimal solution and the learner's solution. Thus the novelty of our analysis is in directly bounding the expert cost, which is arguably more relevant, generalizing to the setting of local optimality, and analyzing a wider set of algorithms.
The analysis in \cite{DBLP:journals/corr/abs-1205-4213} also provides a lower bound on the average difference in \emph{utlity} between the expert solution and initial solution of $\Omega(\frac{1}{\sqrt{T}})$. For one of our algorithms, the upper bound on the \emph{cost} we obtain is better than this, ($O(\frac{1}{T}))$, indicating the difference between utility and cost. In fact, working out the specific example given as proof of the bound in \cite{DBLP:journals/corr/abs-1205-4213} does provide a worst-case example, where the average cost is equal to the upper bound.

We will make one extra assumption for our theoretical analysis in addition to Assumptions 1 and 2 regarding local optimality.
\begin{assumption}
\label{assum:localopt}
If $C_t > 0$, then $\hat{U}(\langle x,y_t' \rangle) < \hat{U}(\langle x,y_t\rangle)$. %In other words, the learner disagrees that the solution presented by the expert is an improvement.
%If $C_t > 0$, then $y'$ (see previous section) is equal to the expert solution that arises after applying the entire sequence of expert selected improvement operators.
\end{assumption}
\noindent This assumption assures us that the learner will favor its own solution $y_t$ over the final locally optimal improvement provided by the expert. %This is a reasonable assumption in most cases.
 In practice, this can be guaranteed by observing the entire \emph{sequence} of candidate solutions the expert goes through to obtain their final solution and taking the last candidate in the sequence where the learner disagrees with it being an improvement on the original candidate solution. There will be at least one such solution in the sequence because of the assumption of local optimality of the original candidate solution. %One can then use the final such candidate solution in the sequence.
Of course, this implies that the actual effort will potentially be larger than the effort until the presented solution was found. However, since any such case has a cost of at least 1, we still have an upper bound on  the number of times where the expert is able to provide improvements on the solution by local changes.

We now state the main result providing cost bounds for our four algorithms. The proof can be found in appendix A.
\begin{trm}
\label{theorem}
Given a linear expert utility function with weights $\w^*$, under Assumptions 1-3 we get the following average cost bounds:
\begin{tiny}
\begin{eqnarray*}
\mbox{PER:} & \frac{1}{T} \sum_{t=0}^{T} C_t &\leq \frac{2R||\w^*||}{\kappa \sqrt{T}} \\
\mbox{CSPER:} & \frac{1}{T} \sum_{t=0}^{T} C_t &\leq \frac{1}{T} \sum_{t=0}^{T} C_t^2 \leq \frac{4R^2||\w^*||^2}{\kappa^2 T} \\
\mbox{PA:} & \frac{1}{T} \sum_{t=0}^{T} C_t  &\leq \frac{4R^2||\w^*||^2}{\kappa^2 \sqrt{T}} \\
\mbox{CSPA:} & \frac{1}{T} \sum_{t=0}^{T} C_t &\leq \frac{1}{T} \sum_{t=0}^{T} C_t ^2\leq \frac{4R^2||\w^*||^2}{\kappa^2 \sqrt{T}}
\end{eqnarray*}
\end{tiny}
\end{trm}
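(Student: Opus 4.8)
The plan is to run the classical two-sided Perceptron potential argument on the sequence of weight vectors $\w_t$, simultaneously tracking a lower bound on the correlation $\w_{t+1}\T\w^*$ and an upper bound on the norm $\|\w_{t+1}\|^2$, and then playing them against each other with Cauchy--Schwarz. First I would extract the three per-round inequalities the assumptions hand us, all holding on rounds with $C_t>0$ (the only rounds on which $\w_t$ changes). (i) The \emph{true-utility margin}: since each of the $C_t$ expert operators raises $U$ by at least $\kappa$ (Assumption~2) and $U={\w^*}\T\vec{\phi}$, telescoping the improvements gives ${\w^*}\T\D_t = U(\langle x_t,y'\rangle)-U(\langle x_t,y_t\rangle)\ge C_t\kappa$. (ii) The \emph{learner disagreement}: Assumption~3 states $\hat{U}(\langle x_t,y'\rangle)<\hat{U}(\langle x_t,y_t\rangle)$, which is exactly $\w_t\T\D_t<0$. (iii) The \emph{bounded step}: the triangle inequality with $\|\vec{\phi}\|\le R$ gives $\|\D_t\|\le 2R$.

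For PER and CSPER the argument is then routine. Writing the update as $\w_{t+1}=\w_t+\lambda_t\D_t$, the correlation grows by $\lambda_t\,{\w^*}\T\D_t\ge \lambda_t C_t\kappa$, while expanding the square gives $\|\w_{t+1}\|^2-\|\w_t\|^2 = 2\lambda_t\,\w_t\T\D_t+\lambda_t^2\|\D_t\|^2\le 4R^2\lambda_t^2$ using (ii) and (iii). Telescoping from $\w_0=0$ and invoking $\w_{T+1}\T\w^*\le\|\w_{T+1}\|\,\|\w^*\|$ yields, for $\lambda_t=1$ (PER), $\kappa\sum_t C_t\le 2R\sqrt{N}\,\|\w^*\|$ where $N\le T$ is the number of update rounds, giving the stated $O(1/\sqrt{T})$ bound; and for $\lambda_t=C_t$ (CSPER), $\kappa\sum_t C_t^2\le 2R\sqrt{\textstyle\sum_t C_t^2}\,\|\w^*\|$, which solves to $\sum_t C_t^2\le 4R^2\|\w^*\|^2/\kappa^2$, an $O(1/T)$ bound after dividing by $T$. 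The chain $\sum_t C_t\le\sum_t C_t^2$, valid because each $C_t$ is a non-negative integer, closes the CSPER case.

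For PA and CSPA the rate and the norm increment both involve $\|\D_t\|$, which is not bounded below a priori, so an extra ingredient is needed, and I would draw it from the margin (i): Cauchy--Schwarz on ${\w^*}\T\D_t\le\|\w^*\|\,\|\D_t\|$ forces $\|\D_t\|\ge C_t\kappa/\|\w^*\|$. With $\w_t\T\D_t<0$, the rate obeys $\lambda_t=\frac{M-\w_t\T\D_t}{\|\D_t\|^2}\ge \frac{M}{4R^2}$ via (iii), so the correlation grows at least like $\frac{M\kappa}{4R^2}\sum_t C_t$. For the norm, the constraint $\w_{t+1}\T\D_t=M$ collapses the increment to $\|\w_{t+1}\|^2-\|\w_t\|^2=\frac{M^2-(\w_t\T\D_t)^2}{\|\D_t\|^2}\le \frac{M^2}{\|\D_t\|^2}\le \frac{M^2\|\w^*\|^2}{\kappa^2}$ (using $\|\D_t\|\ge C_t\kappa/\|\w^*\|$ and $C_t\ge 1$), so $\|\w_{T+1}\|^2\le TM^2\|\w^*\|^2/\kappa^2$. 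Combining through Cauchy--Schwarz makes $M$ cancel, leaving $\sum_t C_t\le \frac{4R^2\|\w^*\|^2}{\kappa^2}\sqrt{T}$, the $M$-independence the text promises. CSPA is identical with $M$ replaced by $C_t$: the rate lower bound becomes $\lambda_t\ge C_t/4R^2$, the correlation grows like $\frac{\kappa}{4R^2}\sum_t C_t^2$, the norm increment is bounded by $C_t^2/\|\D_t\|^2\le \|\w^*\|^2/\kappa^2$, and the same combination gives $\sum_t C_t^2\le \frac{4R^2\|\w^*\|^2}{\kappa^2}\sqrt{T}$.

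I expect the main obstacle to be the PA/CSPA norm control: because $\|\D_t\|$ can in principle be tiny, the naive bound $M^2/\|\D_t\|^2$ on the per-round norm growth is useless, and the whole argument hinges on the observation that the margin requirement (i) secretly also \emph{lower}-bounds $\|\D_t\|$. Getting this interplay right, and checking that the $M$ (respectively the per-round $C_t$ factor) cancels cleanly rather than leaving a spurious dependence, is the delicate part; the rest is standard Perceptron bookkeeping together with the integrality trick $C_t\le C_t^2$ and $N\le T$.
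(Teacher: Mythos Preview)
Your proposal is correct and follows essentially the same route as the paper's proof: the classical two-potential Perceptron argument (lower-bounding $\w_{T+1}\T\w^*$ via Assumption~2, upper-bounding $\|\w_{T+1}\|^2$ via Assumption~3 and $\|\D_t\|\le 2R$, then combining with Cauchy--Schwarz), together with the key extra observation for PA/CSPA that the margin inequality ${\w^*}\T\D_t\ge\kappa C_t$ also forces the lower bound $\|\D_t\|\ge C_t\kappa/\|\w^*\|$ needed to control the norm growth. Your algebraic collapse of the PA norm increment to $(M^2-(\w_t\T\D_t)^2)/\|\D_t\|^2$ and the integrality step $\sum C_t\le\sum C_t^2$ are exactly what the paper does as well.
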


We first observe that the cost-sensitive Perceptron bound decreases significantly faster than the other bounds, achieving a rate of $O(1/T)$ compared to $O(1/\sqrt{T})$. This implies that there is a constant bound on the summed costs. At least when comparing PER and CSPER, this shows the potential benefit of introducing cost into the learning process. Rather, we do not see such a rate reduction for cost-sensitive PA versus PA, which both achieve the same bound. It is not clear whether or not this is due to a loose analysis or is inherent in the algorithms. We do not yet have lower bounds on the rates of these algorithms, which would shed light on this issue. We do see however, that the bounds for both cost-sensitive algorithms hold for the average squared cost, which are strictly better bounds in our setting. Thus, even in the PA case, these results suggest that there is a potential advantage to introducing cost into learning.

Similar error bounds for the PER and CSPER algorithms in the case where there is noise in the expert's assessment, i.e. the expert might mis-estimate the relative utilities of the original solution and a suggested improvement by a term $\xi_t$ are presented in appendix B.

% so that Assumption~\ref{assump:exp} is relaxed, and we only have the assumption that for each example $x_t$,  $U(\langle x_t,y' \rangle) - U(\langle x_t, y \rangle) \geq \kappa C_t - \xi_t$. In this case, for the simple perceptron algorithm we get the bound $\frac{1}{T} \sum_{t=0}^{T} C_t \leq \frac{2R||%\w^*||}{\kappa \sqrt{T}} + \frac{1}{T}\sum_{t=0}^{T} \xi_t$, which is comparable to the bound presented in \cite{DBLP:journals/corr/abs-1205-4213}. For the CSPER algorithm, we obtain the bound $ \frac{1}{T} \sum_{t=0}^{T} C_t^2 \leq \frac{4R^2||\w^*||^2}{\kappa^2 T} + \frac{1}{T} \sum_{t=0}^{T} \xi_t + \frac{2R}{\kappa}||\w^*||\sqrt{\sum_{t=0}^{T}\xi_t}$. It is not clear whether similar bounds hold for the PA-based algorithm, since they are susceptible to noise. Space limitations preclude the proofs, which follow the proofs in the appendix closely.

\section{Experiments}
Experiments were performed in 4 different problem domains. All reported results are averages over 10 runs of the experiments.
In the first 3 domains the utility function is a linear function of the features. %In some of the experiments, however, the solver cannot observe all the features which are available to the expert. This provides noise, and for this setting the theoretical results of section~\ref{sect:theory} do not necessarily hold.
In each domain, two different settings were tested. In one of the settings, there are 10 dimensions to the feature vectors. In the other, there are 11 features, but one of the features is only available to the expert, not to the solver. This adds noise to the experiment, and learning a weight vector which does not incur any cost is impossible.  The coefficients of the expert weight vector were generated uniformly at random from $[0,1]$.

In the cases where even the optimal weight vector does not guarantee non-zero cost, this average cost will be shown in the results, labeled `expert'.
All graphs shown have a log-scale on the $Y$-axis, unless reported otherwise.

\def \imageWidth {0.66}

\begin{figure*}[h!tb]
\centering
\subfigure[Path Planning, No noise]{
\includegraphics[width=\imageWidth\columnwidth]{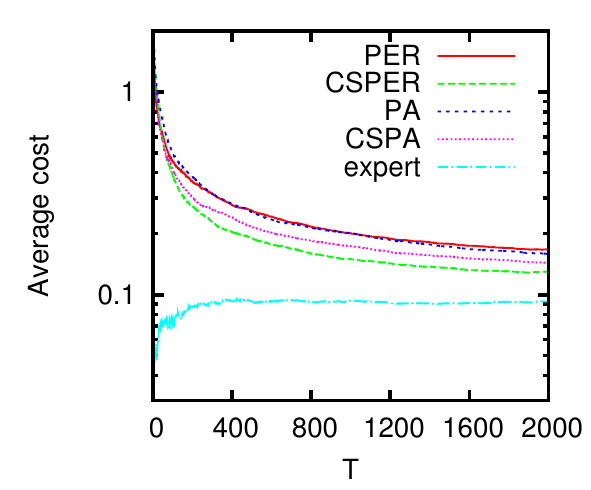}
\label{fig:cube_0_1}
}
\centering
\subfigure[Path Planning, Noise]{
\includegraphics[width=\imageWidth\columnwidth]{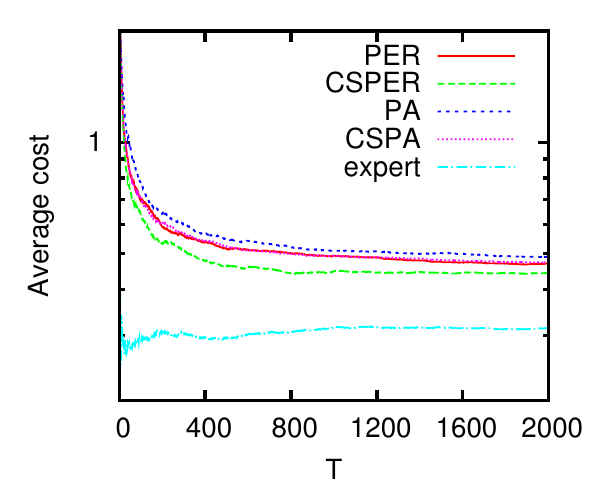}
\label{fig:cube_1_1}
}
\centering
\subfigure[TSP, No noise]{
\includegraphics[width=\imageWidth\columnwidth]{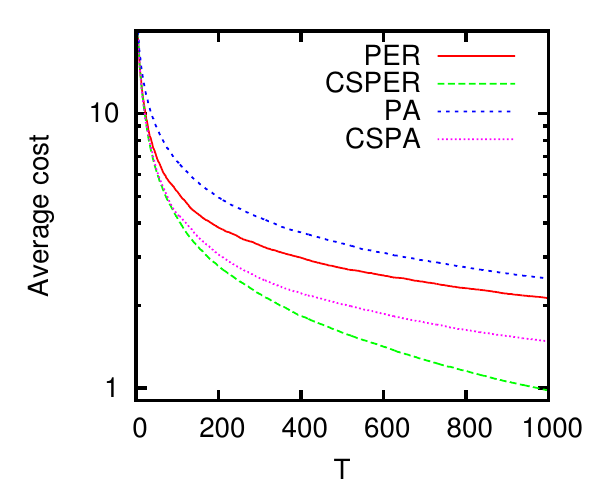}
\label{fig:TSP_0_1}
}
\centering
\subfigure[TSP, Noise]{
\includegraphics[width=\imageWidth\columnwidth]{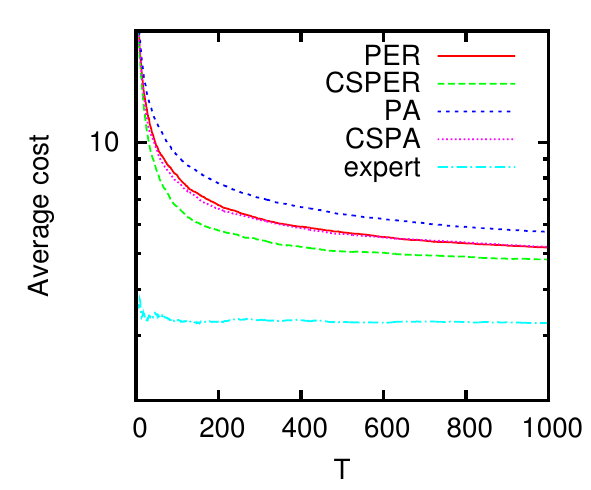}
\label{fig:TSP_1_1}
}
\centering
\subfigure[multi-TSP, No noise]{
\includegraphics[width=\imageWidth\columnwidth]{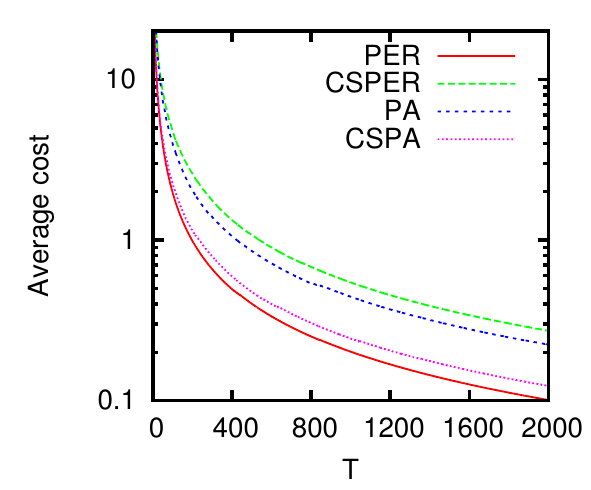}
\label{fig:mTSP_0_1}
}
\centering
\subfigure[multi-TSP, Noise]{
\includegraphics[width=\imageWidth\columnwidth]{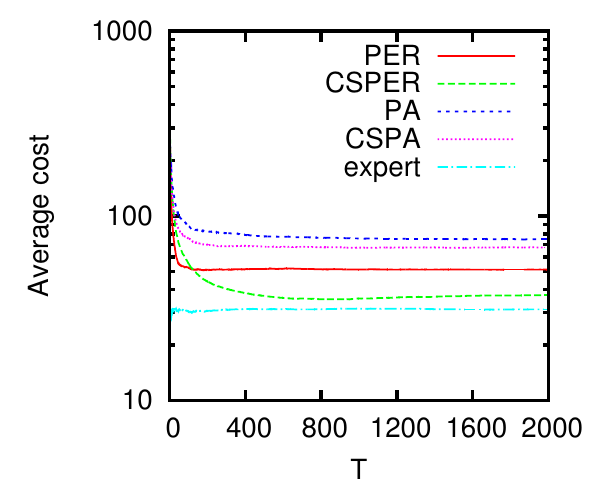}
\label{fig:mTSP_1_1}
}
\centering
\subfigure[Learning to Rank, Unlimited budget]{
\includegraphics[width=\imageWidth\columnwidth]{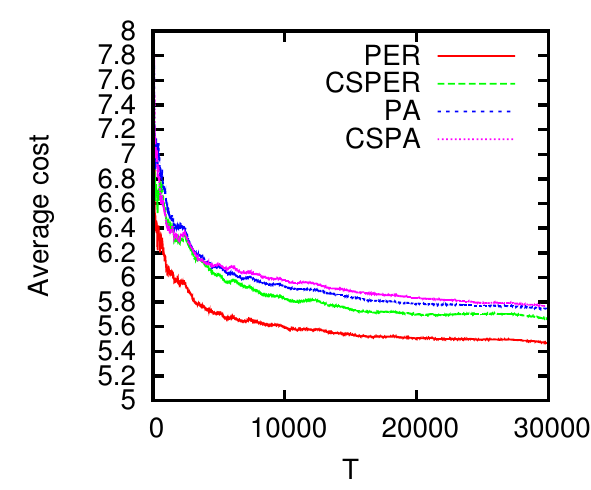}
\label{fig:webscope_raw}
}
\centering
\subfigure[Learning to Rank, Limited budget]{
\includegraphics[width=\imageWidth\columnwidth]{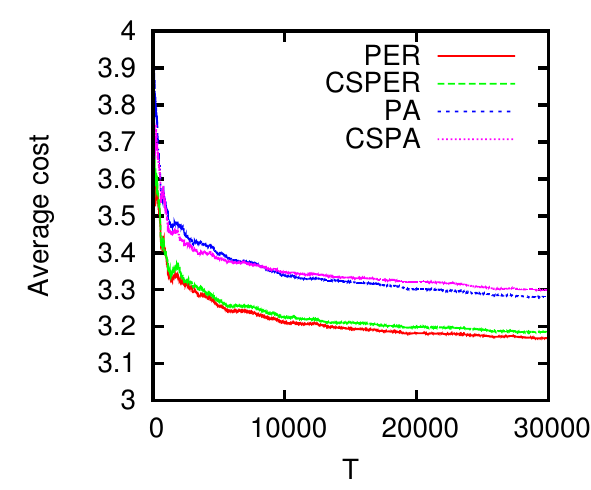}
\label{fig:webscope_lim}
}
\label{fig:LTR}
\caption{Average cost for the experiments}
\end{figure*}

{\bf Path Planning}
%\subsection{Path Planning}
The first task is a simple path planning domain. The environment consists of a 7-dimensional hypercube, where the solver needs to find the optimal path of length 7 from one corner to the diagonally opposite corner. There are 7! such possible paths. Each edge is described by a feature vector, and the feature vector of a total path is the sum of the vectors describing the edges used in the path.

The solver uses a simple 2 step lookahead search. The expert can improve this trajectory by looking at three subsequent moves and reordering them, if any such reordering gives an improvement of at least $\kappa = 0.1$, until no such improvement is possible. Both a noisy and noise-free setting were tested. Note that the expert uses a different set of optimizations from the solver. This means that even with  perfect weight vector, a cost of more than 0 is unavoidable.
Results here (as shown in figures~\ref{fig:cube_0_1} and \ref{fig:cube_1_1}) show that CSPER performs best, and PA performs worst.
%\begin{figure*}
%\centering
%\subfigure[No noise, $\kappa = 0.0$]{
%\includegraphics[width=0.6\columnwidth]{cube_0_0}
%\label{fig:cube_0_0}
%}
%\centering
%\subfigure[No noise, $\kappa = 0.1$]{
%\includegraphics[width=0.6\columnwidth]{cube_0_1}
%\label{fig:cube_0_1}
%}
%\centering
%\subfigure[Noise, $\kappa = 0.1$]{
%\includegraphics[width=0.6\columnwidth]{cube_1_1}
%\label{fig:cube_1_1}
%}
%\label{fig:cube}
%\caption{Average cost for the path planning problem}
%\end{figure*}

%------------------

%\begin{figure*}
%\centering
%\subfigure[Path Planning, No noise]{
%\includegraphics[width=0.5\columnwidth]{cube_0_1}
%\label{fig:cube_0_1}
%}
%\centering
%\subfigure[Path Planning, Noise]{
%\includegraphics[width=0.5\columnwidth]{cube_1_1}
%\label{fig:cube_1_1}
%}
%\centering
%\subfigure[TSP, No noise]{
%\includegraphics[width=0.5\columnwidth]{TSP_big_0_1}
%\label{fig:TSP_0_1}
%}
%\centering
%\subfigure[TSP, Noise]{
%\includegraphics[width=0.5\columnwidth]{TSP_big_1_1}
%\label{fig:TSP_1_1}
%}
%
%\label{fig:cube}
%\caption{Average cost for the path planning and TSP problems}
%\end{figure*}

%--------------------

{\bf TSP}
%\subsection{TSP}
%\label{sect:TSP}
To show the feasibility of the approach for a more interesting problem, a Traveling salesperson problem was used. A random set of 20 points was generated, with their connecting edges described by 100 dimensional feature vectors (with or without an extra 10 hidden dimensions). The solver uses a fast approximate solver. It keeps a set of unvisited points, and starts with a path containing only 1 point. The solver determines which unvisited point can be added to the current path at the least extra cost, and adds the point, until all points are visited. It then performs an optimization by performing 2-opt updates. Each such an update removes two edges from the path and reconnects the path in the shortest possible way.

The expert also uses 2-opt updates to improve the solution, but only performs an update if it decreases the path cost by at least  $\kappa$. The cost is the number of such updates performed until a local optimum is reached.
Features were generated according to a uniform $[0,1]$ distribution. The slack variable $\kappa$ has value $0.1$.
Results are shown in figures~\ref{fig:TSP_0_1} and \ref{fig:TSP_1_1}. As in the previous experiments, CSPER outperforms the other algorithms.

%\begin{figure*}
%\centering
%\subfigure[No noise, $\kappa = 0.0$]{
%\includegraphics[width=0.6\columnwidth]{TSP_0_0}
%\label{fig:TSP_0_0}
%}
%\centering
%\subfigure[No noise, $\kappa = 0.1$]{
%\includegraphics[width=0.6\columnwidth]{TSP_0_1}
%\label{fig:TSP_0_1}
%}
%\centering
%\subfigure[Noise, $\kappa = 0.1$]{
%\includegraphics[width=0.6\columnwidth]{TSP_1_1}
%\label{fig:TSP_1_1}
%}
%\caption{Average cost for the TSP problem}
%\label{fig:TSP}
%\end{figure*}

{\bf Multi-TSP}
%\subsection{Multi-TSP}
Multi-TSP (mTSP) is a generalization of TSP to multiple salespersons. Given a set of 40 points, and 4 salespersons each with their own start and end locations among the given points, the goal is to determine a tour for each of the salespersons such that the total tour cost is minimized, and each point is visited exactly once by at least one salesperson.

For this experiment, a random set of 40 points are generated with their connecting edges described by feature vectors. The solver is a generalization of the TSP solver introduced in the previous section. It finds the tours for each salesperson incrementally, by finding a point that can be added to the existing tours with least cost. It repeats this strategy until no more points are left. It then applies 2-OPT adaptations to find a locally optimal solution. Similar to the TSP experiment, the expert uses 2-OPT heuristic to improve tours.

Results are shown in figures~\ref{fig:mTSP_0_1} and \ref{fig:mTSP_1_1}. For the noisy setting, CSPER clearly outperforms the other algorithms. For the settings without noise, it performs worst, however. One possible explanation is that it focuses too much on the first few data points (where costs are high), relative to later points. The regular perceptron does not have this behavior, and results for this algorithm are good in all the settings.

%\begin{figure*}
%\centering
%\subfigure[No noise, $\kappa = 0.0$]{
%\includegraphics[width=0.55\columnwidth]{mTSP_0_0}
%\label{fig:mTSP_0_0}
%}
%\centering
%\subfigure[No noise, $\kappa = 0.1$]{
%\includegraphics[width=0.55\columnwidth]{mTSP_0_1}
%\label{fig:mTSP_0_1}
%}
%\centering
%\subfigure[Noise, $\kappa = 0.1$]{
%\includegraphics[width=0.55\columnwidth]{mTSP_1_1}
%\label{fig:mTSP_1_1}
%}
%\label{fig:MTSP}
%\caption{Average cost for the multi-TSP problem}
%\end{figure*}

%---------------------

%\begin{figure*}
%\centering
%\subfigure[multi-TSP, No noise]{
%\includegraphics[width=0.5\columnwidth]{mTSP_0_1}
%\label{fig:mTSP_0_1}
%}
%\centering
%\subfigure[multi-TSP, Noise]{
%\includegraphics[width=0.5\columnwidth]{mTSP_1_1}
%\label{fig:mTSP_1_1}
%}
%\centering
%\subfigure[Learning to Rank, Unlimited budget]{
%\includegraphics[width=0.5\columnwidth]{webscope_raw}
%\label{fig:webscope_raw}
%}
%\centering
%\subfigure[Learning to Rank, Limited budget]{
%\includegraphics[width=0.5\columnwidth]{webscope_lim}
%\label{fig:webscope_lim}
%}
%\label{fig:MTSP}
%\caption{Average cost for the multi-TSP problem and Learning to Rank dataset}
%\end{figure*}

%------------------

{\bf Learning to Rank Dataset}
%\subsection{Learning to Rank Dataset}
A final experiment was performed using the Yahoo! Learning to Rank dataset \cite{LTRC}, a real-world dataset consisting of web-based queries and lists of documents which were returned for those queries. Human experts labeled the documents with scores from 0 to 4 according to relevance to the query. All documents are described by a 700-dimensional feature vector. The learner sorts all documents according to decreasing estimated utility. The expert then iterates over this list, exchanging the order of two subsequent elements if the latter one is  substantially better than the first (a score difference of 2 or more). The effort is the number of such exchanges until no such exchanges are possible. All reported results are averages over 10 random permutations of the original dataset.

Results are shown in figure~\ref{fig:webscope_raw} (notice the linear scale on the Y-axis). The classic perceptron algorithm performs best, with CSPER second-best. The relatively poor performance of the PA-based algorithms can be easily explained by the noise in the dataset. Since the PA algorithms aggressively correct the last observed error, any amount of noise could have a potentially disastrous effect.
Note that CSPER does not perform as well as in most of the other experiments. One possible reason for this is that, as in the noise-free multi-TSP setting, the effort for the first few examples was large, resulting in a large jump in the weight vector. Subsequent costs are much smaller, which means it takes longer to fine-tune the weight vector.
To test this, the experiment was repeated using an upper bound on the effort the expert was willing to make. For each example, a random integer between 5 and 15 was generated as the maximum ``budget'' for the expert. Results are shown in Figure~\ref{fig:webscope_lim} (notice the linear scale on the Y-axis). It is clear that this bound on the maximum cost greatly benefits the CSPER algorithm, which now only performs slightly worse than the perceptron.

%
%\begin{figure*}
%\centering
%\subfigure[Unlimited budget]{
%\includegraphics[width=0.7\columnwidth]{webscope_raw}
%\label{fig:webscope_raw}
%}
%\centering
%\subfigure[Limited budget]{
%\includegraphics[width=0.7\columnwidth]{webscope_lim}
%\label{fig:webscope_lim}
%}
%\caption{Average cost for the Learning to Rank dataset}
%\label{fig:webscope}
%\end{figure*}
%
%

\section{Conclusions}

In this paper, a novel problem statement for coactive learning was introduced. The main difference with existing work on coactive learning is that the problems are assumed to be so hard that finding the globally optimal solution is intractable, and an expert can be expected to only make small, local changes to a solution. We assume that there exist
locally optimal solvers that can provide high-quality solutions when having an accurate estimate of the expert's utility function.

Four algorithms were presented for this task. Since the objective is to minimize the effort spent to improve the candidate solutions, two of the algorithms directly take this cost into account in their update functions.
Theoretical bounds on the average cost for the four algorithms were shown, where the cost-sensitive perceptron algorithm was shown to have a much stronger bound. Empirically it was verified that in most settings, the cost-sensitive versions of the algorithms outperform their cost-insensitive versions. The cost-sensitive perceptron performs best in most datasets, specifically when noise was present. However, in some cases it was observed that cost sensitivity hurt performance in cases where early high-cost problems significantly alter the weight vector. Our empirical results suggest that bounding the maximum change in weight vector can help in these cases. This leaves an open question of how to most robustly and effectively take cost into account in the co-active learning setting. 

Our final empirical observation was that the Passive Aggressive algorithms perform the worst, which is in contrast to such algorithms in the context of online classification. The reason for this might be that the algorithms are too aggressive in their updates,  effectively overfitting the last seen example.
This leads to some possible directions for future work including investigation of (mini-)batch learning algorithms and versions of the PA algorithm which limit the aggressiveness \cite{crammer2006online}.

\section*{Appendix A: Proof of Theorem 1}

\begin{proof}
Each bound is derived using a similar strategy in the tradition of Perceptron-style analysis. The proof ignores all steps where no update occurs since these have cost equal to 0 and would only reduce the average cost. First, we show an upper bound on $||\w_{T+1}||^2$. Then, for each algorithm, we provide a lower bound on $\w_{T+1}\T\w^*$ in terms of the sum of costs $\sum_{t=0}^{T}C_t$ or sum of squared costs  $\sum_{t=0}^{T}C_t^2$.

Using the general update  $\w_{T+1} = \w_T + \lambda_T\D_T$, we get the following bound for $||w_{T+1}||^2$:
$||\w_{T+1}||^2 = ||\w_T||^2 + 2 \lambda_T \w_T\T \D_T + \lambda_T^2 \D_T\T \D_T$
$\leq  ||\w_T||^2 +  \lambda_T^2\D_T\T \D_T$
$\leq  ||\w_T||^2 + \lambda_T^2(2R)^2 \leq 4R^2 \sum_{t=0}^{T} \lambda_T^2$
where the last step follows by induction. The second step follows from Assumption~\ref{assum:localopt}.

Then, we use the Cauchy-Schwarz inequality $\w_{T+1}\T \w^*   \leq ||\w_{T+1}|| \cdot ||\w^*||$ to obtain an upper bound on the sum of costs, which can be trivially turned into a bound on the average cost. Note that in the setting of the paper, where the costs are natural numbers, the sum of the squares of the costs is an upper bound on the sum of the costs:
$\sum_{t=0}^{T}C_t \leq \sum_{t=0}^{T}C_t^2$
Below we derive bounds for each algorithm in turn.

{\bf Perceptron.} We get the upper bound $||\w_{T+1}||^2 < 4R^2T$.
For the lower bound on $\w_{T+1}\T\w^*$ we get:
$
\w_{T+1}\T\w^* = \w_T\T\w^* + \D_T \T \w^* \geq  \w_T\T\w^* + \kappa C_T \geq \kappa \sum_{t=0}^{T} C_t
$
Here the first inequality follows from Assumptions 2 and 3. In particular by Assumption 2 we have that for any step where $C_t > 0$, $U(x_t,y')-U(x_t,y_t)\leq \kappa\cdot C_t$.

Applying Cauchy-Schwarz we get:\\
$\kappa \sum_{t=0}^{T} C_t \leq 2R||\w^*||\sqrt{T}$

{\bf Cost-Sensitive Perceptron.}
We have the upper bound $||\w_{T+1}||^2 \leq 4R^2\sum_{t=0}^{T}C_t^2$
and a lower bound on $\w_{T+1}\T\w^*$\\
$ = \w_T\T\w^* +C_T \D_T \T \w^* \geq  \w_T\T\w^* + \kappa C_T^2
\geq \kappa \sum_{t=0}^{T} C_t^2 $

Applying Cauchy-Schwarz we get:\\
%\begin{eqnarray*}
$\kappa \sum_{t=0}^{T} C_t^2 \leq 2R||\w^*||\sqrt{ \sum_{t=0}^{T} C_t^2 }$\\
$\sqrt{ \sum_{t=0}^{T} C_t^2 } \leq \frac{2R||\w^*||}{\kappa}$ $\Rightarrow$
$\sum_{t=0}^{T} C_t^2 \leq \frac{4R^2||\w^*||^2}{\kappa^2}$
%\end{eqnarray*}

{\bf PA.} We will use the shorthand $S = \frac{M - \w_T\T\D_T}{\D_T\T\D_T}$.
We begin with proving an upper bound on $\D_T\T\D_T$. We know that $\D_T\T\w^* \geq \kappa C_T$. From the Cauchy-Schwarz inequality we then have that $\kappa C_T \leq ||w^*|| \cdot ||\D_T||$. This implies that $\D_T\T\D_T = ||\D_T||^2 \geq \frac{\kappa^2 C_T^2}{ ||w^*|| ^2}$. Since we only consider those examples where the cost is non-zero, this means that $\D_T\T\D_T \geq \frac{\kappa^2}{ ||w^*|| ^2}$.

An upper bound on  $||\w_{T+1}||^2$\\
$ =  ||\w_T||^2 + 2 S\w_T\T \D_T + S^2  \D_T\T \D_T\\$
$=   ||\w_T||^2 + S (2\w_T\T\D_T + (M - \w_T\T\D_T))$
$=  ||\w_T||^2 + S (M + \w_T\T\D_T) $
$=  ||\w_T||^2 + \frac{M^2 -  (\w_T\T\D_T)^2}{\D_T\T\D_T}$
$\leq  ||\w_T||^2 + \frac{M^2}{\D_T\T\D_T} \leq \frac{||\w^*||^2T M^2}{\kappa^2}$

Now a lower bound on $\w_{T+1}\T\w^*$\\
$= \w_T\T\w^* +S \D_T \T \w^*$
\begin{eqnarray*}
&&\geq  \w_T\T\w^* + S \kappa C_T = \w_T\T\w^* + \frac{M - \w_T\T\D_T}{\D_T\T\D_T} \kappa C_T\\
&&\geq \w_T\T\w^*+ \frac{M \kappa C_T}{\D_T\T\D_T} \geq \frac{M\kappa}{4R^2}  \sum_{t=0}^{T} C_t
\end{eqnarray*}

Applying Cauchy-Schwarz we get\\
$M\kappa/(4R^2) \sum_{t=0}^{T} C_t \leq ||\w^*||^2M\sqrt{T}/\kappa$

{\bf Cost-Sensitive PA.} Here we will use the shorthand $S' = (C_T - \w_T\T\D_T)/(\D_T\T\D_T)$, and the bound $\D_T\T\D_T \geq \frac{\kappa^2 C_T^2}{ ||w^*|| ^2}$.

An upper bound on $||\w_{T+1}||^2$\\
$ =  ||\w_T||^2 + 2 S'\w_T\T \D_T + S'^2  \D_T\T \D_T$
$=   ||\w_T||^2 + S' (2\w_T\T\D_T + (C_T - \w_T\T\D_T))$
$=  ||\w_T||^2 + S' (C_tT+ \w_T\T\D_T)$
$=  ||\w_T||^2 + \frac{C_T^2-  (\w_T\T\D_T)^2}{\D_T\T\D_T}$
$\leq  ||\w_T||^2 + \frac{C_T^2}{\D_T\T\D_T} \leq \frac{||\w^*||^2T}{\kappa^2}$

Now a lower bound on $\w_{T+1}\T\w^*$\\
$ = \w_T\T\w^* +S' \D_T \T \w^*$\\
$\geq  \w_T\T\w^* + S' \kappa C_T = \w_T\T\w^* + \frac{C_T - \w_T\T\D_T}{\D_T\T\D_T} \kappa C_T$\\
$\geq \w_T\T\w^*+ \frac{ \kappa C_T^2}{\D_T\T\D_T} \geq \w_T\T\w^*+ \frac{ \kappa C_T^2}{4R^2} \geq \frac{\kappa}{4R^2}  \sum_{t=0}^{T} C_t^2$\\
Using this and applying Cauchy-Schwarz we get: $\frac{\kappa}{4R^2} \sum_{t=0}^{T} C_t^2 \leq \frac{||\w^*||^2\sqrt{T}}{\kappa}$

\end{proof}

\pagebreak

\section*{Appendix B: Bounds for Noisy Data}

In this appendix, we present the analysis of the average cost bounds for the PER and CSPER algorithms in the case where there is noise in the expert's assessment, i.e. the expert might mis-estimate the relative utilities of the original solution and a suggested improvement by a term $\xi_t$ so that Assumption~3 is relaxed, and we only have the assumption that for each example $x_t$,  $U(\langle x_t,y' \rangle) - U(\langle x_t, y \rangle) \geq \kappa C_t - \xi_t$. In this case, for the simple perceptron algorithm we get the bound $\frac{1}{T} \sum_{t=0}^{T} C_t \leq \frac{2R||\w^*||}{\kappa \sqrt{T}} + \frac{1}{\kappa T}\sum_{t=0}^{T} \xi_t$, which is comparable to the bound presented in \cite{DBLP:journals/corr/abs-1205-4213}. For the CSPER algorithm, we obtain the bound $ \frac{1}{T} \sum_{t=0}^{T} C_t \leq \frac{1}{T} \sum_{t=0}^{T} C_t^2$\\ $ \leq\frac{4R^2||\w^*||^2}{\kappa^2T}  + \frac{4R||\w^*||}{\kappa^2T}  \sqrt{\sum_{t=0}^{T}  \xi_t C_t }  + \frac{1}{\kappa^2T} \sum_{t=0}^{T}  \xi_t C_t $.

It is not clear whether similar bounds hold for the PA-based algorithm, since they are susceptible to noise.

\setcounter{trm}{1}
\begin{trm}
\label{theorem}
Given a linear expert utility function with weights $\w^*$, under Assumptions 1-3 we get the following average cost bounds:

PER:$ \frac{1}{T} \sum_{t=0}^{T} C_t \leq \frac{2R||\w^*||}{\kappa \sqrt{T}} + \frac{1}{\kappa T}\sum_{t=0}^{T} \xi_t$

CSPER $ \frac{1}{T} \sum_{t=0}^{T} C_t \leq \frac{1}{T} \sum_{t=0}^{T} C_t^2$\\ $ \leq\frac{4R^2||\w^*||^2}{\kappa^2T}  + \frac{4R||\w^*||}{\kappa^2T}  \sqrt{\sum_{t=0}^{T}  \xi_t C_t }  + \frac{1}{\kappa^2T} \sum_{t=0}^{T}  \xi_t C_t $

\end{trm}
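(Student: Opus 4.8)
The plan is to mirror the two-part Perceptron-style argument used for Theorem~1, changing only the one place where the noise enters. Recall that proof combines an upper bound on $||\w_{T+1}||^2$ with a lower bound on $\w_{T+1}\T\w^*$ via Cauchy--Schwarz. The upper bounds there ($||\w_{T+1}||^2 \leq 4R^2 T$ for PER and $||\w_{T+1}||^2 \leq 4R^2\sum_{t=0}^{T} C_t^2$ for CSPER) rely only on the update rule and on Assumption~3 (which gives $\w_t\T\D_t \leq 0$); neither of these is touched by the relaxation, so I would reuse both upper bounds verbatim. All of the work is therefore in re-deriving the lower bound on $\w_{T+1}\T\w^*$, where the noise term $\xi_t$ appears.

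First I would record the single per-step consequence of the relaxed assumption. Since $U$ is linear with weights $\w^*$, we have $U(\langle x_t,y'\rangle) - U(\langle x_t,y_t\rangle) = \D_t\T\w^*$, so the relaxed hypothesis reads $\D_t\T\w^* \geq \kappa C_t - \xi_t$. For PER ($\lambda_t=1$), telescoping $\w_{T+1}\T\w^* = \sum_{t=0}^{T}\D_t\T\w^*$ now gives $\w_{T+1}\T\w^* \geq \kappa\sum_{t=0}^{T} C_t - \sum_{t=0}^{T}\xi_t$. Combining with $\w_{T+1}\T\w^* \leq ||\w_{T+1}||\,||\w^*|| \leq 2R||\w^*||\sqrt{T}$ and solving for $\sum_t C_t$ produces exactly the stated PER bound, the additive $\frac{1}{\kappa T}\sum_t\xi_t$ being the only new term. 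This is the standard noisy-Perceptron manipulation and I expect it to be routine.

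The substantive step is CSPER ($\lambda_t=C_t$). Here multiplying the per-step inequality by $C_t$ and summing gives $\w_{T+1}\T\w^* \geq \kappa\sum_{t=0}^{T} C_t^2 - \sum_{t=0}^{T}\xi_t C_t$. Writing $A = \sqrt{\sum_{t=0}^{T} C_t^2}$ and $B = \sum_{t=0}^{T}\xi_t C_t$, Cauchy--Schwarz together with the reused upper bound gives $\kappa A^2 - B \leq \w_{T+1}\T\w^* \leq 2R||\w^*||\,A$, i.e. the quadratic inequality $\kappa A^2 - 2R||\w^*||A - B \leq 0$. The main obstacle, and the only genuinely new calculation, is extracting a clean bound on $A^2 = \sum_t C_t^2$ from this quadratic: solving for its positive root and then loosening it with $\sqrt{x+y}\leq\sqrt{x}+\sqrt{y}$ (and $\sqrt{\kappa B}\leq\sqrt{B}$, which uses $\kappa\leq 1$) yields $A \leq \frac{2R||\w^*|| + \sqrt{B}}{\kappa}$. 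Squaring gives $\sum_t C_t^2 \leq \frac{1}{\kappa^2}\bigl(2R||\w^*|| + \sqrt{B}\bigr)^2$, which expands precisely into the three stated terms $\frac{4R^2||\w^*||^2}{\kappa^2} + \frac{4R||\w^*||}{\kappa^2}\sqrt{B} + \frac{1}{\kappa^2}B$; dividing by $T$ and using $\sum_t C_t \leq \sum_t C_t^2$ finishes the CSPER bound.

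I would close by noting why only PER and CSPER are treated: the PA analyses in Theorem~1 crucially used $\D_t\T\w^* \geq \kappa C_t$ to lower-bound $||\D_t||^2 \geq \kappa^2 C_t^2/||\w^*||^2$, and this lower bound (hence the control of the PA learning rate) collapses once the right-hand side is weakened to $\kappa C_t - \xi_t$, so the same route does not go through under noise.
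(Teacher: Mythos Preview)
Your proposal is correct and follows essentially the same route as the paper: reuse the noise-free upper bounds on $\|\w_{T+1}\|^2$, weaken the per-step inequality to $\D_t\T\w^* \geq \kappa C_t - \xi_t$, and for CSPER solve the resulting quadratic in $\theta=\sqrt{\sum_t C_t^2}$ before squaring. You are in fact slightly more careful than the paper in flagging that passing from $\sqrt{R^2\|\w^*\|^2+\kappa B}$ to $R\|\w^*\|+\sqrt{B}$ uses $\kappa\leq 1$; the paper drops the $\kappa$ inside the radical without comment.
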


\begin{proof}
{\bf PER}\\
For the Perceptron algorithm, we have the following bound on $||\w_{T+1}||^2$:
\begin{eqnarray*}
||\w_{T+1}||^2 &=& ||\w_T||^2 + 2\w_T\T \D_T + \D_T\T \D_T\\
&\leq&  ||\w_T||^2 +  \D_T\T \D_T\\
&\leq & ||\w_T||^2 + (2R)^2\\
& \leq& 4R^2 T
\end{eqnarray*}

For the lower bound on $\w_{T+1}\T\w^*$ we get:
$$
\w_{T+1}\T\w^* = \w_T\T\w^* + \D_T \T \w^* \geq  \w_T\T\w^* + \kappa C_T  - \xi_T$$
$$ \geq \kappa \sum_{t=0}^{T} C_t - \sum_{t=0}^{T} \xi_t
$$

Combining these and applying Cauchy-Schwarz we get: 
$$\kappa \sum_{t=0}^{T} C_t \leq 2R||\w^*||\sqrt{T} + \sum_{t=0}^{T} \xi_t$$

$$\frac{1}{T} \sum_{t=0}^{T} C_t \leq \frac{2R||\w^*||}{\kappa\sqrt{T}} + \frac{1}{\kappa T}\sum_{t=0}^{T} \xi_t$$

{\bf CSPER}\\

For the CSPER algorithm, we have the following bound on $||\w_{T+1}||^2$:
\begin{eqnarray*}
||\w_{T+1}||^2 &=& ||\w_T||^2 + 2C_T\w_T\T \D_T + C_T^2 \D_T\T \D_T\\
&\leq&  ||\w_T||^2 + C_T^2 \D_T\T \D_T\\
&\leq&  ||\w_T||^2 + (2R)^2C_T^2\\
& \leq& 4R^2 \sum_{t=0}^{T} C_t^2
\end{eqnarray*}

and the following bound on $\w_{T+1}\T\w^*$:
\begin{eqnarray*}
\w_{T+1}\T\w^* &=& \w_T\T\w^* +C_T \D_T \T \w^*\\
& \geq&  \w_T\T\w^* + \kappa C_T^2 - \xi_T C_T\\
&\geq& \kappa \sum_{t=0}^{T} C_t^2 - \sum_{t=0}^{T}  \xi_t C_t
\end{eqnarray*}

Combining these, using Cauchy-Schwarz, we get:
$$ \kappa \sum_{t=0}^{T} C_t^2 - \sum_{t=0}^{T}  \xi_t C_t \leq 2R ||\w^*|| \sqrt{\sum_{t=0}^{T} C_t^2}$$

Using the shorthand notation $\theta =   \sqrt{\sum_{t=0}^{T} C_t^2}$, we can rewrite this as:
$$ \kappa \theta^2 -2R ||\w^*|| \theta -  \sum_{t=0}^{T}  \xi_t C_t \leq 0$$
This is a quadratic inequality in terms of $\theta$. The upper bound for $\theta$ we get from this is:
$$\theta \leq \frac{R||\w^*|| + \sqrt{R^2 ||\w^*||^2 +  \sum_{t=0}^{T}  \xi_t C_t }}{\kappa}$$

For simplicity, we can safely rewrite this (using $\sqrt{A^2 + B^2} \leq \sqrt{A^2} + \sqrt{B^2}$) as:
$$ \theta \leq \frac{2R||\w^*|| + \sqrt{\sum_{t=0}^{T}  \xi_t C_t }}{\kappa}$$

Since $\sum_{t=0}^{T} C_t^2 = \theta^2$, we get:
$$\sum_{t=0}^{T} C_t^2 \leq \frac{4R^2||\w^*||^2}{\kappa^2}  + \frac{4R||\w^*||}{\kappa^2}  \sqrt{\sum_{t=0}^{T}  \xi_t C_t }  + \frac{1}{\kappa^2} \sum_{t=0}^{T}  \xi_t C_t $$

\end{proof}

\pagebreak

\section*{Acknowledgements}
The authors acknowledge support of the ONR ATL program N00014-11-1-0105.

\bibliographystyle{aaai}

\bibliography{coactive}

\end{document}